\pgfplotsset{compat=newest}
\definecolor{pltBlue}{HTML}{1f77b4}
\definecolor{pltOrange}{HTML}{ff7f0e}
\definecolor{pltGreen}{HTML}{2ca02c}
\definecolor{pltRed}{HTML}{d62728}
\definecolor{pltPurple}{HTML}{9467bd}
\definecolor{pltBrown}{HTML}{8c564b}
\definecolor{pltPink}{HTML}{e377c2}
\definecolor{pltGray}{HTML}{7f7f7f}
\definecolor{pltOlive}{HTML}{bcbd22}
\definecolor{pltCyan}{HTML}{17becf}
\definecolor{githubColor}{HTML}{2EA44F}
\definecolor{newGray}{HTML}{808080}
\definecolor{colorCircle}{HTML}{0072BD}
\DeclareRobustCommand\sensorcircle{\tikz \fill[black, fill=colorCircle] circle (0.75ex);}
\definecolor{colorRect}{HTML}{D95319}
\DeclareRobustCommand\sensorrect{\tikz \fill[black, fill=colorRect] rectangle (1.5ex, 1.5ex);}
\newcolumntype{O}[1]{S[detect-weight, mode=text, table-format=#1]}
\renewcommand{\bfseries}{\fontseries{b}\selectfont}
\newrobustcmd{\B}{\bfseries}
\newcommand\copyrighttext{\footnotesize \textcopyright~2023 IEEE. Personal use of this material is permitted. Permission from IEEE must be obtained for all other uses, in any current or future media, including reprinting/republishing this material for advertising or promotional purposes, creating new collective works, for resale or redistribution to servers or lists, or reuse of any copyrighted component of this work in other works.%
}
\newcommand\copyrightnotice{%
    \begin{tikzpicture}[remember picture,overlay]%
 	\node[anchor=south, xshift=0pt, yshift=10pt] at (current page.south)%
 	{\fbox{\parbox{\dimexpr\textwidth-\fboxsep-\fboxrule\relax}{\copyrighttext}}};%
 	\end{tikzpicture}%
}
\newtheoremstyle{tstyle}
  {}
  {}
  {\itshape}
  {}
  {\bfseries}
  {.}
  { }
  {\thmname{#1}\thmnumber{ #2}\thmnote{ (#3)}}%
\theoremstyle{tstyle}
\newtheorem{corollary}{Corollary}
\newtheorem{proposition}{Proposition}
\newcommand{\mbeq}{\overset{!}{=}}
\newcommand{\mat}[1]{\boldsymbol{#1}}
\newcommand{\quat}[1]{\mathrm{#1}}
\newcommand{\quatsv}[2]{\left[#1, #2\right]}
\renewcommand{\vec}[1]{\boldsymbol{#1}}
\newcommand{\mneg}{^\text{\rmfamily \textup{-}}}
\newcommand{\mpos}{^\text{\rmfamily \textup{+}}}
\newcommand{\norm}[1]{\left\lVert#1\right\rVert}
\newcommand{\trans}{^\text{\rmfamily \textup{T}}}
\newcommand{\vectorize}{\operatorname{vec}}
\newcommand{\sol}[1]{\hat{#1}}
\newcommand{\pspace}{\,}  
\title{\LARGE \bf
User Feedback and Sample Weighting\\for Ill-Conditioned Hand-Eye Calibration
}
\author{Markus Horn, Thomas Wodtko, Michael Buchholz, and Klaus Dietmayer%
\thanks{This work was financially supported by the Federal Ministry of Education and Research (BMBF) (project UNICARagil, FKZ\,16EMO0290) and the State Ministry of Economic Affairs Baden-Württemberg (project U-Shift\,II, AZ\,3-433.62-DLR/60).}%
\thanks{All authors are with the Institute of Measurement, Control and Microtechnology, Ulm University, Albert-Einstein-Allee 41, 89081 Ulm, Germany {\tt\footnotesize \{firstname\}.\{lastname\}@uni-ulm.de}}%
}
\begin{document}

\maketitle
\thispagestyle{empty}
\pagestyle{empty}

\begin{abstract}
Hand-eye calibration is an important and extensively researched method for calibrating rigidly coupled sensors, solely based on estimates of their motion.
Due to the geometric structure of this problem, at least two motion estimates with non-parallel rotation axes are required for a unique solution.
If the majority of rotation axes are almost parallel, the resulting optimization problem is ill-conditioned.
In this paper, we propose an approach to automatically weight the motion samples of such an ill-conditioned optimization problem for improving the conditioning.
The sample weights are chosen in relation to the local density of all available rotation axes.
Furthermore, we present an approach for estimating the sensitivity and conditioning of the cost function, separated into the translation and the rotation part.
This information can be employed as user feedback when recording the calibration data to prevent ill-conditioning in advance.
We evaluate and compare our approach on artificially augmented data from the KITTI odometry dataset.
\end{abstract}

\section{INTRODUCTION}

\copyrightnotice%
Hand-eye calibration is a common approach for extrinsic calibration of sensors, e.g., cameras or inertial measurement units (IMUs), especially when target-based calibration is not possible.
The goal of hand-eye calibration is to estimate the transformation between two rigidly coupled coordinate frames based on multiple motion pairs given in the respective frames.
For estimating the calibration using these pairs, also called samples, different methods have been developed, e.g., \cite{daniilidis1999hand, andreff2001robot, brookshire2013extrinsic, horn2021online}.
A common example from robotics is the calibration between an end effector and a camera mounted on the 
robot~\cite{tsai1989new, schmidt2003robust}.
Another example is the calibration of sensors for automated vehicles, e.g., when working with IMUs, which have no field of view (FOV) in which targets can be detected.
In this work, the focus is on the calibration of autonomous vehicles and the special requirements associated with them.

Due to the geometry of hand-eye calibration, the applied motions must fulfill certain constraints.
More precisely, at least two motions with non-parallel rotation axes are required for obtaining a unique solution~\cite{tsai1989new, schmidt2003robust}.
Otherwise, the translation along the common rotation axis cannot be estimated.
Further, \cite{schmidt2003robust} describes that in case the majority of rotation axes are almost parallel, the optimization problem is ill-conditioned, which can lead to a unique but inaccurate solution.
This is an important aspect, especially for autonomous vehicles, which mostly move in a planar manner.

\begin{figure}
    \centering
    \resizebox{0.7\columnwidth}{!}{%
        \input{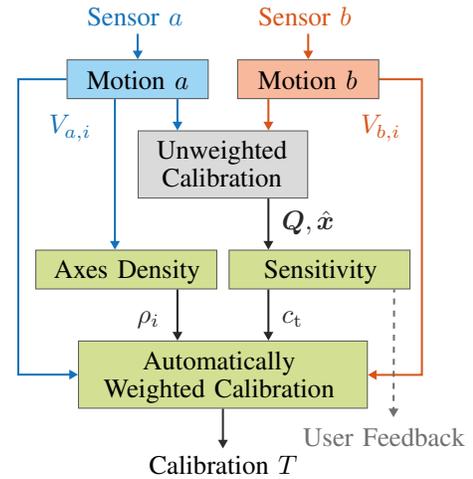}%
    }
    \caption{
    Our proposed method consists of three novel parts~(green):
    The estimated motions $V_{a,i}$ and $V_{b,i}$ are first used for an unweighted calibration, e.g., by \cite{horn2021online}.
    The resulting unweighted cost matrix $\mat{Q}$ and the estimated calibration $\sol{\vec{x}}$ are then utilized for estimating the sensitivity of the cost function and the translation condition number $c_t$.
    The sensitivity and conditioning can be employed as user feedback during calibration.
    Besides, the motions of one of the sensors ($V_{a,i}$ in this case) are used to estimate the local densities $\rho_i$ of the rotation axes.
    These densities are then combined with $c_t$ for generating sample weights and the final calibration $T$.}
    \label{fig:overview_graph}
\end{figure}

Based on the observations in~\cite{tsai1989new}, Shi et al.~\cite{shi2005approach} propose the ``Golden Rules'' for hand-eye calibration:
\begin{enumerate}[topsep=0pt]
    \item Try to make the angles between the motions' rotation axes large.  
    \item Try to make the motions' rotation angles large.  
    \item Try to make the motions' translations small.  
\end{enumerate}
The first rule tackles the previously described issue of an ill-conditioned optimization problem.
The other two rules try to improve the signal-to-noise ratio of the underlying data.

One possibility to implement these rules is when recording the data.
However, especially for the first rule, it is not always clear to the user, e.g., how even the distribution of rotation axes currently is.
Hence, an automatic analysis of the current conditioning and of a possibly overrepresented axis would be beneficial as user feedback.
Such an analysis is a key contribution of our work.

In case data have already been recorded ill-conditioned, using only a subset of samples or assigning individual weights to samples is another option for compensating an overrepresented rotation axis.
However, as our discussion in Section~\ref{sec:related_work} and evaluation in Section~\ref{sec:experiments} will show, most existing approaches for this are either difficult to parameterize or unsuitable for the special case of autonomous vehicles.
Especially the third rule cannot be obeyed with usual vehicles since their rotation is physically coupled to their translation.
Therefore, an adapted approach is required for this case.

Hence, we propose the approach shown in Fig.~\ref{fig:overview_graph} including
\begin{itemize}[topsep=0pt]
    \item a sensitivity measure providing an estimation for the rotation and translation conditioning (cf. Section~\ref{sec:sensitivity});
    \item a formulation for the local density of rotation axes (cf. Section~\ref{sec:density_weights}); and
    \item a sample weighting based on the original conditioning and the axes-density for an improved conditioning of the optimization problem (Section~\ref{sec:combined_weights}).
\end{itemize}

\section{RELATED WORK}
\label{sec:related_work}

Over the last decades, many different approaches for hand-eye calibration have been proposed.
Common approaches are, e.g., linear formulations like~\cite{daniilidis1999hand} or~\cite{andreff2001robot}, based on dual quaternions~(DQs) and homogeneous matrices, respectively.
The paper~\cite{giamou2019certifiably} as well as our previous work~\cite{horn2021online} propose Quadratically Constrained Quadratic Program (QCQP) formulations based on these linear formulations, which enable to obtain a certifiably globally optimal solution.
The evaluations show that the QCQP approaches are more robust to noise than the original linear formulations.
Even though both approaches enable individual weighting of samples in their cost function, the selection of these weights is not described in detail.

Based on the ``Golden Rules'', \cite{shi2005approach} proposes to combine multiple transformations until a minimum angle between rotation axes, as well as a minimum rotation angle, is achieved.
Unfortunately, this is not suitable when using motion data from per-sensor ego-motion estimates, since 
the drift of the estimation leads to larger errors when combining consecutive measurements.
For single transformations,~\cite{zhang2005adaptive} proposes an approach for selecting the ``Golden Rule'' thresholds automatically from available data.
However, as stated in the previous section, strictly following the ``Golden Rules'' is not possible when using motion data from sensors mounted on automated vehicles. 

Other approaches for the selection of samples, like~\cite{li2018simultaneous, liu2019robust}, are designed for removing outliers, e.g., using RANSAC~\cite{li2018simultaneous}.
Further, instead of enforcing hard thresholds and removing samples,~\cite{mair2011spatio, samant2019robust} propose methods for selecting individual weights for each sample.
However, since all mentioned approaches focus on estimation errors or other properties of individual samples, they cannot solve the problem of an uneven sample distribution within the whole dataset by design. 
Nevertheless, it is possible to combine these methods with the approach described in this work to achieve a higher robustness against outliers.
 
The adverse influence of unevenly distributed motion, i.e., a dominant rotation axis, is a key aspect of the methods proposed in~\cite{schmidt2003robust, schmidt2008data}.
Both approaches use the DQ-based formulation from~\cite{daniilidis1999hand} for the underlying optimization problem.
First, both methods pre-select samples with a minimum rotation angle and a maximum translation, following the second and third of the ``Golden Rules''.
Afterward, the motions of one sensor are used to select a sample subset with more evenly distributed rotation axes.
In~\cite{schmidt2003robust}, multiple pairs of samples with small scalar products between their rotation axes are selected using brute force.
Even though this ensures a more even distribution, the complexity is $\mathcal{O}(n^4)$, which leads to high run times for larger datasets.
Therefore, \cite{schmidt2008data} proposes a selection of samples based on the codebook resulting from a vector quantization~(VQ) of the rotation.
The closest sample to each codebook entry is selected for calibration.
Further, they compare their performance on different rotation representations, like rotation axes or quaternions.
However, a major drawback of this approach is its dependency on the performance of the VQ approach and the selected codebook size.
Even though~\cite{schmidt2008data} proposes a relative codebook size that depends on the dataset size, our experiments in Section~\ref{sec:experiments:parameters} show that it is difficult to find an optimal size.

In contrast, our method directly calculates the local density of the rotation axes for each sample without relying on an additional preprocessing step.
This density calculation is easier to parameterize, as the experiments in Section~\ref{sec:experiments} show.
Furthermore, in~\cite{schmidt2008data}, many samples are removed even though they could have a positive influence on the calibration result, especially when the estimated motions are subject to higher noise.
To avoid this, our proposed method assigns individual weights to each sample.
This means that samples with a higher local density are only weighted lower instead of removing them completely, so they can still contribute to the overall calibration result.

\section{FOUNDATIONS}
\label{sec:foundations}

In this section, a short introduction into quaternions and DQs for spatial transformations is given.
Further, the optimization problem for hand-eye calibration derived in~\cite{horn2021online} is described, which is the foundation for this work.
Transformations without a fixed representation are denoted as functions~$T$, (dual) quaternions are denoted as upright letters $\quat{q}$ with the respective vectorized form as $\vec{q} = \vectorize(\quat{q})$.

\subsection{Dual Quaternions}
\label{sec:dual_quaternions}

A common representation for spatial rotations are unit quaternions~\cite{siciliano2009robotics}.
A quaternion $\quat{r}$ can be denoted in scalar-vector notation $\quat{r} = \quatsv{s_\quat{r}}{\vec{v}_\quat{r}}$ using the real-valued scalar $s_\quat{r} \in \mathds{R}$ and the imaginary parts, denoted by $\vec{v}_\quat{r} \in \mathds{R}^3$.
The multiplication of two quaternions yields
\begin{align}
    \quat{a} \, \quat{b} = \quatsv
        {s_\quat{a} s_\quat{b} - \vec{v}_\quat{a}\trans \vec{v}_\quat{b}}
        {\ s_\quat{a} \vec{v}_\quat{b} + s_\quat{b} \vec{v}_\quat{a} + \vec{v}_\quat{a} \times \vec{v}_\quat{b}} \pspace .
\end{align}

A dual quaternion $\quat{q} = \quat{r} + \epsilon \, \quat{d}$ consists of the real part $\quat{r}$, the dual part $\quat{d}$ and the dual unit $\epsilon$ with $\epsilon^2 = 0$ \cite{mccarthy1990introduction}.
The multiplication of two DQs yields
\begin{align}
    \quat{q}_a \, \quat{q}_b =  \quat{r}_a \quat{r}_b + \epsilon \, (\quat{r}_a \quat{d}_b + \quat{d}_a \quat{r}_b) \pspace .
\end{align}

A unit quaternion $\quat{r} = \quatsv{\cos\left(\nicefrac{\varphi}{2}\right)}{\vec{n} \sin\left(\nicefrac{\varphi}{2}\right)}$ can be used to describe a 3D rotation by an angle $\varphi \in \mathds{R}$ around an axis $\vec{n} \in \mathds{R}^3$ with $\norm{\vec{n}}_2 = 1$.
Thus, $\quat{r}$ and $-\quat{r}$ describe the same rotation.
For spatial transformations, a unit DQ can be used.
In this case, the real part $\quat{r}$ is a unit quaternion describing the rotation, and the dual part is $\quat{d} = \frac{1}{2} \quatsv{0}{\vec{t}} \quat{r}$ with translation vector $\vec{t} \in \mathbb{R}^3$.
Consequently, pure translational and pure rotational transformations are described by the functions
\begin{subequations}
\begin{align}
    \label{eq:pure_translation}
    T_{t}(\vec{t}) &= \quatsv{1}{\vec{0}} + \epsilon \, \frac{1}{2} \quatsv{0}{\vec{t}} \pspace ,\\
    \label{eq:pure_rotation}
    T_{r}(\varphi, \vec{n}) &= \quatsv{\cos\left(\frac{\varphi}{2}\right)}{\vec{n} \sin\left(\frac{\varphi}{2}\right)} + \epsilon \quatsv{0}{\vec{0}} \pspace ,
\end{align}
\end{subequations}
respectively.

The transformation chain $T_a \circ T_b$ represented as DQs yields $\quat{q}_a \,\quat{q}_b$.
Using vectorized DQs, $\quat{q}_{ab} = \quat{q}_a \, \quat{q}_b$ can be represented as matrix-vector products $\vec{q}_{ab} = \mat{Q}\mpos_a \vec{q}_b = \mat{Q}\mneg_b \vec{q}_a$ with 
\begin{align}
    \mat{Q}\mpos_a = \begin{bmatrix}
        \mat{R}\mpos_a & \mat{0}\\
        \mat{D}\mpos_a & \mat{R}\mpos_a
    \end{bmatrix} \pspace , \quad
    \mat{Q}\mneg_b = \begin{bmatrix}
        \mat{R}\mneg_b & \mat{0}\\
        \mat{D}\mneg_b & \mat{R}\mneg_b
    \end{bmatrix} \pspace ,
\end{align}
where $\mat{R}\mpos_a$, $\mat{R}\mneg_b$ denote the respective real and $\mat{D}\mpos_a$, $\mat{D}\mneg_b$ the respective DQ matrix representations.

\subsection{Optimization Problem}

\begin{figure}
    \centering
    \resizebox{0.6\columnwidth}{!}{%
\begingroup%
  \makeatletter%
  \providecommand\color[2][]{%
    \errmessage{(Inkscape) Color is used for the text in Inkscape, but the package 'color.sty' is not loaded}%
    \renewcommand\color[2][]{}%
  }%
  \providecommand\transparent[1]{%
    \errmessage{(Inkscape) Transparency is used (non-zero) for the text in Inkscape, but the package 'transparent.sty' is not loaded}%
    \renewcommand\transparent[1]{}%
  }%
  \providecommand\rotatebox[2]{#2}%
  \newcommand*\fsize{\dimexpr\f@size pt\relax}%
  \newcommand*\lineheight[1]{\fontsize{\fsize}{#1\fsize}\selectfont}%
  \ifx\svgwidth\undefined%
    \setlength{\unitlength}{118.25025558bp}%
    \ifx\svgscale\undefined%
      \relax%
    \else%
      \setlength{\unitlength}{\unitlength * \real{\svgscale}}%
    \fi%
  \else%
    \setlength{\unitlength}{\svgwidth}%
  \fi%
  \global\let\svgwidth\undefined%
  \global\let\svgscale\undefined%
  \makeatother%
  \begin{picture}(1,0.45211106)%
    \lineheight{1}%
    \setlength\tabcolsep{0pt}%
    \put(0,0){\includegraphics[width=\unitlength,page=1]{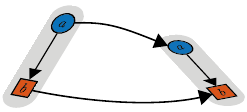}}%
    \put(0.44623413,0.0666195){\makebox(0,0)[lt]{\lineheight{1.25}\smash{\begin{tabular}[t]{l}$V_b$\end{tabular}}}}%
    \put(0.47453999,0.3666734){\makebox(0,0)[lt]{\lineheight{1.25}\smash{\begin{tabular}[t]{l}$V_a$\end{tabular}}}}%
    \put(0.82891406,0.1856647){\makebox(0,0)[lt]{\lineheight{1.25}\smash{\begin{tabular}[t]{l}$T$\end{tabular}}}}%
    \put(0.09342157,0.23218045){\makebox(0,0)[lt]{\lineheight{1.25}\smash{\begin{tabular}[t]{l}$T$\end{tabular}}}}%
  \end{picture}%
\endgroup%
    }
    \caption{
    The transformation graph for two sensors, sensor $a$ (\sensorcircle) and sensor $b$ (\sensorrect), at two consecutive time steps is shown.
    For this cycle, the unknown calibration $T$ and the estimated sensor motions $V_a$ and $V_b$ are illustrated.
    }
    \label{fig:transformation_cycle}
\end{figure}

In this section, the optimization problem derived in~\cite{horn2021online} is recapitulated.
The QCQP formulation is required for our sensitivity analysis and individual weighting in Section~\ref{sec:sensitivity_and_weighting}.
However, our approach can also be applied on any other QCQP formulation that accumulates sample-wise costs, e.g.~\cite{giamou2019certifiably}.

Fig.~\ref{fig:transformation_cycle} shows the hand-eye transformation cycle $V_a \circ T = T \circ V_b$ for a single motion pair.
Expressed as DQs, this yields $\quat{q}_a \,\quat{q}_T = \quat{q}_T \,\quat{q}_b$.
Vectorizing the DQs and using matrix-vector products for multiplications, as described in Section~\ref{sec:dual_quaternions}, leads to the linear formulation $\mat{M} \vec{x} = \vec{0}$ with $\mat{M} = \mat{Q}\mpos_a - \mat{Q}\mneg_b$ and $\vec{x} = \vectorize(\quat{q}_T) \in \mathds{R}^8$.
Weighted summation of the squared norm of this formulation over all motion pairs $i=1,\dots,n$ leads to the quadratic cost function
\begin{align}
\label{eq:cost_function}
    J(\vec{x}) = \vec{x}\trans \mat{Q} \vec{x} \quad
    \text{with } \mat{Q} = \sum_{i=1}^{n} w_i \, \mat{M}_i\trans \mat{M}_i
\end{align}
and sample weights $w_i \in \mathbb{R}$.
In contrast to~\cite{horn2021online}, we do not enforce $\sum_{i=1}^n w_i = 1$ in order to avoid numerical issues in case of a large number of samples.
Therefore, uniformly distributed weights with $w_i = 1$ are initially used before updating the weights for an improved conditioning, as described in Section~\ref{sec:sensitivity_and_weighting}. 

Two constraints must be enforced during optimization to ensure that $\vec{x}$ represents a valid unit DQ:
\begin{subequations}
\label{eq:constraints}
\begin{gather}
    \vec{g}_\mathrm{DQ}(\vec{x}) = 
    \begin{bmatrix}
        1 + \vec{x}\trans \mat{P}_\mathrm{r} \vec{x} \\
        \vec{x}\trans \mat{P}_\mathrm{d} \vec{x} \\
    \end{bmatrix}  
    \mbeq \vec{0} \pspace ,
    \\
    \text{with }
    \mat{P}_\mathrm{r} = 
    \begin{bmatrix}
        -\mat{I}_{4} & \mat{0}_{4} \\
        \mat{0}_{4} & \mat{0}_{4} \\
    \end{bmatrix} \pspace , \
    \mat{P}_\mathrm{d} = 
    \begin{bmatrix}
        \mat{0}_{4} & \mat{I}_{4} \\
        \mat{I}_{4} & \mat{0}_{4} \\
    \end{bmatrix}
    \pspace .
\end{gather}
\end{subequations}
In combination with the previously derived cost function~\eqref{eq:cost_function}, this leads to the QCQP formulation for hand-eye calibration using DQs:
\begin{subequations}
\begin{align}
    \!\min_{\vec{x}\in\mathbb{R}^{8}} & \quad J(\vec{x}) \pspace ,\\
    \text{w.r.t.} & \quad \vec{g}_\mathrm{DQ}(\vec{x}) \mbeq \vec{0} \pspace .
\end{align}
\end{subequations}

Due to the constraints, the optimization problem is non-convex, even though the cost function $J(\vec{x})$ is convex.
However, as described in~\cite{horn2021online}, the Semidefinite Programming~(SDP) relaxation of this QCQP problem can be used to obtain a certifiably globally optimal solution.
This globally optimal solution is the foundation for the approach described in this work.
For a more detailed explanation of the SDP relaxation, we refer to~\cite{horn2021online}.
\section{SENSITIVITY AND SAMPLE WEIGHTING}
\label{sec:sensitivity_and_weighting}

This section describes the main contribution of this work, visualized as green blocks in Fig.~\ref{fig:overview_graph}.
First, a method for analyzing the sensitivity of the cost function w.r.t. small deviations from the globally optimal solution is described in Section~\ref{sec:sensitivity}.
A relatively low sensitivity for a specific translation axis indicates that the rotation around this axis is overrepresented in the motion samples.
To reduce this imbalance, an approach for sample weighting based on the density of the rotation axes is described in Section~\ref{sec:density_weights}.
These weights are used in combination with the conditioning of the original optimization problem to obtain a balanced cost function in Section~\ref{sec:combined_weights}.

\subsection{Sensitivity and Conditioning}
\label{sec:sensitivity}

In this section, a method for estimating the sensitivity of the original, uniformly weighted cost function with weights $w_i = 1$ and cost matrix $\mat{Q}$ is described.
The sensitivity is estimated w.r.t. small deviations from the globally optimal solution $\sol{\quat{x}}$, here expressed as a DQ.
In contrast to the usual definition of sensitivity, we do not analyze the general influence of input uncertainties on the output.
Instead, we apply pure translational and rotational transformations on the dual quaternion solution $\sol{\quat{x}}$ and observe their influence on the costs.
Thus, the constraints~\eqref{eq:constraints} for valid unit dual quaternions are ensured for all samples.
The influence of these deviations is analyzed in the following.

\begin{proposition}[Linear Influence on Transformations]
The influence of transformations with small translations and small rotation angles on other transformations is approximately linear w.r.t. the translation norm and the rotation angle.
\end{proposition}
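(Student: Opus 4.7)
The plan is to expand both perturbation DQs around the identity, invoke the bilinearity of DQ multiplication, and collect the leading-order terms.

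First I would observe that $T_t(\vec{t})$ in~\eqref{eq:pure_translation} is already exactly affine in~$\vec{t}$: its deviation from the identity DQ $\quatsv{1}{\vec{0}} + \epsilon\,\quatsv{0}{\vec{0}}$ equals $\epsilon\,\tfrac{1}{2}\quatsv{0}{\vec{t}}$, which is exactly linear in~$\vec{t}$ with no remainder. For the rotation, I would Taylor-expand $\cos(\varphi/2) = 1 + O(\varphi^2)$ and $\sin(\varphi/2) = \varphi/2 + O(\varphi^3)$ in~\eqref{eq:pure_rotation} to obtain $T_r(\varphi,\vec{n}) = \quatsv{1}{\vec{0}} + \tfrac{\varphi}{2}\,\quatsv{0}{\vec{n}} + O(\varphi^2)$, so that the leading perturbation is linear in~$\varphi$ and, since $\norm{\vec{n}}_2 = 1$, the quadratic remainder has a bounded coefficient.

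Next, for any other transformation DQ $\quat{q}$, I would write the composition of the small perturbation with $\quat{q}$ as $(\quatsv{1}{\vec{0}} + \delta)\,\quat{q}$, where $\delta$ denotes either of the perturbation terms above, and apply the DQ multiplication rule from Section~\ref{sec:dual_quaternions}. By bilinearity the product equals $\quat{q} + \delta\,\quat{q}$, so the deviation of the composed transformation from~$\quat{q}$ is $\delta\,\quat{q}$. Substituting the two expressions for $\delta$ shows that this deviation is linear in $\norm{\vec{t}}_2$ or in $\varphi$ (and linear in $\quat{q}$), up to the $O(\varphi^2)$ remainder in the rotational case. The same argument would apply symmetrically if the perturbation is composed on the right of~$\quat{q}$.

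The main obstacle is mild: I must verify that the quadratic remainder in the rotation case, once multiplied by $\quat{q}$, contributes only a correction of order $O(\varphi^2\,\norm{\quat{q}}_2)$, so that the linear term dominates for small $\varphi$ and bounded~$\quat{q}$. For $T_t(\vec{t})$ no such estimate is needed, since linearity there is exact. No deeper structural argument is required beyond the Taylor expansion of the trigonometric functions and the bilinearity of DQ multiplication.
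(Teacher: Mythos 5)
Your proposal is correct and follows essentially the same route as the paper's proof: expand the perturbing dual quaternion to first order about the identity (exact for the translation, small-angle Taylor expansion for the rotation) and use the bilinearity of DQ multiplication so that composing with a fixed transformation preserves linearity in $\norm{\vec{t}}_2$ and $\varphi$. The only differences are cosmetic—the paper multiplies the perturbation on the right of the specific solution $\sol{\quat{x}}$ and writes out the resulting quaternion components explicitly, while you argue for a generic $\quat{q}$ and track the $O(\varphi^2)$ remainder explicitly.
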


\begin{proof}
Applying a pure translation $T_\mathrm{t}(\alpha \, \vec{t})$ from~\eqref{eq:pure_translation} with scaling $\alpha \in \mathds{R}$ and translation vector $\vec{t} \in \mathds{R}^3$ on $\sol{\quat{x}} = \sol{\quat{r}} + \epsilon \, \sol{\quat{d}}$ yields the offset
\begin{subequations}
\begin{align}
    \sol{\quat{x}} \ T_\mathrm{t}(\alpha \, \vec{t}) - \sol{\quat{x}} 
    &= \left(\sol{\quat{r}} + \epsilon \, \sol{\quat{d}} \right) \, \left(\quatsv{1}{\vec{0}} + \epsilon \, \frac{1}{2} \quatsv{0}{\alpha \, \vec{t}} \right) - \sol{\quat{x}}  \\
    &= \left(\sol{\quat{r}} + \epsilon \, \sol{\quat{d}} \right) + \epsilon \, \left( \frac{\alpha}{2} \, \sol{\quat{r}} \quatsv{0}{\vec{t}} \right) - \sol{\quat{x}} \\
    &= \epsilon \, \frac{\alpha}{2} \, \sol{\quat{r}} \quatsv{0}{\vec{t}} \pspace ,
\end{align}
\end{subequations}
which is linear w.r.t. $\alpha$.
For small rotation angles, the pure rotation $T_\mathrm{r}(\varphi, \vec{n}) = \quatsv{\cos\left(\nicefrac{\varphi}{2}\right)}{\vec{n} \sin\left(\nicefrac{\varphi}{2}\right)}$ from~\eqref{eq:pure_rotation} is approximately $\quatsv{1}{\nicefrac{\varphi}{2} \, \vec{n}}$.
When applied to $\sol{\quat{x}}$, this yields the offset
\begin{subequations}
\begin{align}
    \sol{\quat{x}} \ T_\mathrm{r}(\varphi, \vec{n}) - \sol{\quat{x}}
    &\approx \left( \sol{\quat{r}} + \epsilon \, \sol{\quat{d}} \right) \, \left( \quatsv{1}{\nicefrac{\varphi}{2} \, \vec{n}} \right) - \sol{\quat{x}} \\
    &= 
    \begin{aligned}[t]
        \frac{\varphi}{2} 
        ( &\quatsv{-\sol{\vec{v}}_\quat{r} \vec{n}}{\ \sol{s}_\quat{r} \vec{n} + \sol{\vec{v}}_\quat{r} \times \vec{n}} + \nonumber\\
        \epsilon &\quatsv{-\sol{\vec{v}}_\quat{d} \vec{n}}{\ \sol{s}_\quat{d} \vec{n} + \sol{\vec{v}}_\quat{d} \times \vec{n}} ) \pspace ,
    \end{aligned}\\[-0.5cm]
\end{align}
\end{subequations}
which is linear w.r.t. $\varphi$.
\end{proof}

Since the cost function is quadratic and the constraints~\eqref{eq:constraints} are met for all deviations, this leads to the following conclusion:

\begin{corollary}[Quadratic Influence on the Costs]
\label{cor:quadratic_influence}
The influence on the costs of applying transformations with small translations and small rotation angles to the globally optimal solution $\sol{\quat{x}}$ is approximately quadratic.
\end{corollary}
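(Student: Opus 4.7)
The plan is to substitute the perturbed solution back into the quadratic cost $J(\vec{x}) = \vec{x}\trans \mat{Q} \vec{x}$ and expand. Writing the perturbed solution as $\sol{\vec{x}} + \Delta\vec{x}$, where $\Delta\vec{x}$ is the vectorization of the offset produced by either $T_\mathrm{t}(\alpha\,\vec{t})$ or $T_\mathrm{r}(\varphi,\vec{n})$ in the preceding proposition, one obtains
\begin{equation}
    J(\sol{\vec{x}} + \Delta\vec{x}) = J(\sol{\vec{x}}) + 2\,\sol{\vec{x}}\trans \mat{Q}\,\Delta\vec{x} + \Delta\vec{x}\trans \mat{Q}\,\Delta\vec{x} \pspace .
\end{equation}
By the proposition, $\Delta\vec{x}$ is (exactly) linear in $\alpha$ for the translation perturbation and (approximately) linear in $\varphi$ for the rotation perturbation. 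Consequently the last term is quadratic in $\alpha$ or $\varphi$, which is exactly the statement to be shown, provided the middle linear cross term can be dismissed.

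The main step, and the obstacle, is therefore to show that the linear cross term $2\,\sol{\vec{x}}\trans \mat{Q}\,\Delta\vec{x}$ vanishes to leading order. I would argue this from the optimality of $\sol{\vec{x}}$: since $\sol{\vec{x}}$ is the globally optimal solution of the QCQP and the perturbation curves $\alpha \mapsto \sol{\quat{x}}\,T_\mathrm{t}(\alpha\,\vec{t})$ and $\varphi \mapsto \sol{\quat{x}}\,T_\mathrm{r}(\varphi,\vec{n})$ lie entirely on the constraint manifold defined by $\vec{g}_\mathrm{DQ}(\vec{x}) = \vec{0}$ (products of unit DQs remain unit DQs), the directional derivative of $J$ along these curves at the optimum must vanish. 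By the chain rule this directional derivative equals $2\,\sol{\vec{x}}\trans \mat{Q}\,\frac{d\Delta\vec{x}}{d\alpha}\bigr|_{0}$ (respectively with $\varphi$), and because $\Delta\vec{x}$ is linear in the perturbation parameter to leading order, this derivative is precisely the cross term divided by the parameter. Hence the cross term is $o(\alpha)$ respectively $o(\varphi)$ and can be absorbed into the quadratic remainder.

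A slightly more direct alternative, which I would also mention, is to invoke the KKT conditions at $\sol{\vec{x}}$: there exist multipliers $\vec{\lambda}$ such that the gradient $2\mat{Q}\sol{\vec{x}}$ is a linear combination of the constraint gradients, so $\sol{\vec{x}}\trans \mat{Q}\,\Delta\vec{x}$ can be rewritten in terms of $\vec{\lambda}\trans \nabla\vec{g}_\mathrm{DQ}(\sol{\vec{x}})\trans\Delta\vec{x}$, which vanishes to first order because the perturbation preserves $\vec{g}_\mathrm{DQ} = \vec{0}$ along the path. Either way, collecting terms yields
\begin{equation}
    J(\sol{\vec{x}} + \Delta\vec{x}) - J(\sol{\vec{x}}) = \Delta\vec{x}\trans \mat{Q}\,\Delta\vec{x} + o(\alpha^2 \text{ or } \varphi^2) \pspace ,
\end{equation}
which is approximately quadratic in the perturbation parameter, establishing the corollary. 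The only subtlety beyond routine algebra is keeping track of the small-angle approximation in the rotation case, so that the higher-order remainder truly is negligible compared to the quadratic leading term.
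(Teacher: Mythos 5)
Your proposal is correct, and its core step --- expanding the quadratic cost $J$ on the offsets from the Proposition --- is exactly how the paper arrives at the corollary; however, the paper gives no formal proof, stating only that the conclusion follows ``since the cost function is quadratic and the constraints are met for all deviations.'' What you add, and what the paper leaves implicit, is the treatment of the cross term $2\,\sol{\vec{x}}\trans\mat{Q}\,\Delta\vec{x}$: without an argument, the expansion $J(\sol{\vec{x}}+\Delta\vec{x})-J(\sol{\vec{x}})=2\,\sol{\vec{x}}\trans\mat{Q}\,\Delta\vec{x}+\Delta\vec{x}\trans\mat{Q}\,\Delta\vec{x}$ would be dominated by a \emph{linear} term in $\alpha$ or $\varphi$, contradicting the claimed quadratic behavior (and the paper's subsequent fit of pure quadratic forms $\mat{S}_\mathrm{t},\mat{S}_\mathrm{r}$ without linear parts). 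Your resolution via first-order optimality is sound: the perturbation curves $\alpha\mapsto\sol{\quat{x}}\,T_\mathrm{t}(\alpha\,\vec{t})$ and $\varphi\mapsto\sol{\quat{x}}\,T_\mathrm{r}(\varphi,\vec{n})$ stay on the unit-DQ constraint manifold (in fact their derivatives at the identity span its tangent space), so the directional derivative of $J$ at the constrained global minimizer $\sol{\vec{x}}$ vanishes along them; equivalently, the KKT argument works because the constraint gradients $2\mat{P}_\mathrm{r}\sol{\vec{x}}$ and $2\mat{P}_\mathrm{d}\sol{\vec{x}}$ are linearly independent at any unit DQ, so multipliers exist. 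One small refinement: in the translation case $\Delta\vec{x}$ is \emph{exactly} linear in $\alpha$, so the cost deviation is exactly quadratic there, while the ``approximately'' in the statement is needed only for the rotation case, where the small-angle approximation makes $\Delta\vec{x}$ linear in $\varphi$ only to leading order. In short, your proof is a correct and strictly more complete version of the paper's implicit argument.
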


This means that it is possible to estimate quadratic functions describing the influence of small translational and rotational deviations on the costs around the globally optimal solution.
These functions are defined as
\begin{align}
    S_\mathrm{t}(\vec{t}) = \vec{t}\trans \mat{S}_\mathrm{t} \vec{t} \pspace ,\quad
    S_\mathrm{r}(\vec{r}) = \vec{r}\trans \mat{S}_\mathrm{r} \vec{r} \pspace ,
\end{align}
with translations $\vec{t} \in \mathbb{R}^3$, rotations $\vec{r} = \varphi \vec{n} \in \mathbb{R}^3$ in axis-angle representation, and $\mat{S}_\mathrm{t}, \mat{S}_\mathrm{r} \in \mathbb{R}^{3 \times 3}$ with $\mat{S}_\mathrm{t} = \mat{S}_\mathrm{t}\trans$ and $\mat{S}_\mathrm{r} = \mat{S}_\mathrm{r}\trans$.
For these functions,
\begin{subequations}
\label{eq:cost_function_samples}
\begin{align}
    \vec{t}\trans \mat{S}_\mathrm{t} \vec{t} &\approx J(\sol{\quat{x}} \, T_\mathrm{t}(\vec{t})) - J(\sol{\quat{x}}) =: \Delta J_\mathrm{t}(\vec{t}) \pspace ,\\
    \vec{r}\trans \mat{S}_\mathrm{r} \vec{r} &\approx J(\sol{\quat{x}} \, T_\mathrm{r}(\varphi, \vec{n})) - J(\sol{\quat{x}}) =: \Delta J_\mathrm{r}(\vec{r}) \pspace ,
\end{align}
\end{subequations}
must hold for small translations and small rotation angles accordingly.
Since the estimation process is analogous for $\mat{S}_\mathrm{t}$ and $\mat{S}_\mathrm{r}$, the symbol $\bullet$ is used in the following as placeholder for either $\mathrm{t}$ or $\mathrm{r}$.

For a detailed evaluation of the respective translation and rotation conditioning, the matrices $\mat{S}_{\bullet}$ must be estimated first.
This is achieved by evaluating the cost function at six sample deviations
\begin{align}
\label{eq:sample_points}
    P = \{
    &\left[ 1 \ 0 \ 0 \right]\trans,
    \left[ 0 \ 1 \ 0 \right]\trans,
    \left[ 0 \ 0 \ 1 \right]\trans,\nonumber\\
    &\tfrac{1}{\sqrt{2}} \left[ 1 \ 1 \ 0 \right]\trans,
    \tfrac{1}{\sqrt{2}} \left[ 1 \ 0 \ 1 \right]\trans,
    \tfrac{1}{\sqrt{2}} \left[ 0 \ 1 \ 1 \right]\trans \}
\end{align}
with individual sample vectors $\vec{p}^{(k)}$, scaled by the translation $\delta_\mathrm{t} = \SI{0.1}{\meter}$ and the rotation $\delta_\mathrm{r} = \SI{0.1}{\deg}$, respectively.
Since the matrices $\mat{S}_{\bullet}$ are symmetric, these six samples are sufficient for estimating all entries.
The cost deviation samples are then given by
\begin{align}
\label{eq:cost_function_deviation}
    \Delta J_\mathrm{t}(\delta_\mathrm{t} \, \vec{p}^{(k)}) \pspace , \ 
    \Delta J_\mathrm{r}(\delta_\mathrm{r} \, \vec{p}^{(k)}) \pspace , \quad
    \text{for } k = 1, \dots, 6 \pspace .
\end{align}

Using the sample points~\eqref{eq:sample_points} and the respective cost function deviation~\eqref{eq:cost_function_deviation},
\begin{align}
    \delta_\bullet^2 \, \vec{p}^{(k)} \mat{S}_\bullet \vec{P^{(k)}} =
    \Delta J_{\bullet}(\delta_\bullet \, \vec{p}^{(k)}) \pspace , \ 
    \text{for } k = 1, \dots, 6 \pspace
\end{align}
is transformed to a system of linear equations and solved for the entries of $\mat{S}_{\bullet}$.

Using the estimated $\mat{S}_{\bullet}$, their eigenvalues $\lambda_{\bullet,1}, \lambda_{\bullet,2}, \lambda_{\bullet,3}$ with $|\lambda_{\bullet,1}| \leq |\lambda_{\bullet,2}| \leq |\lambda_{\bullet,3}|$, the corresponding eigenvectors $\vec{v}_{\bullet,1}, \vec{v}_{\bullet,2}, \vec{v}_{\bullet,3}$, and the condition number $c_{\bullet} = | \lambda_{\bullet,3} / \lambda_{\bullet,1} |$ with $c_{\bullet} \ge 1$ allow to draw conclusions about the influence of different deviations.
In our case, a lower eigenvalue indicates that deviations along the corresponding eigenvector have a smaller influence than deviations along the eigenvectors with larger eigenvalues, especially for large condition numbers.

For deriving more detailed information from the eigenvectors and condition numbers, the conclusions of Schmidt et al.~\cite{schmidt2003robust} are important.
They state that, for a large proportion of almost parallel rotation axes, the optimization problem is ill-conditioned since the translation along the rotation axis of each motion cannot be estimated.
Vice versa, this means that a large translation condition number $c_\mathrm{t}$ indicates that rotations around the axis $\vec{v}_{\mathrm{t},1}$ are overrepresented in the motion samples since deviations along this axis have a relatively low influence on the costs.
In contrast, a small condition number close to $1$ indicates more evenly distributed rotation axes.

This information can be employed as user feedback during calibration since the optimization and sensitivity estimation are fast enough to be calculated online while recording the motion samples.
In case of a large translation condition number $c_\mathrm{t}$, rotations with axes orthogonal to the eigenvector $\vec{v}_{\mathrm{t},1}$ should be added.
If the data has already been recorded ill-conditioned, the approach described in the following can help to reduce the imbalance.

\subsection{Density-based Sample Weighting}
\label{sec:density_weights}

In case of an ill-conditioned optimization problem, one possibility is to adjust the weights of the individual samples for reducing the influence of overrepresented rotations.
Instead of using a codebook for grouping or selecting samples as in \cite{schmidt2008data}, our approach uses the local density of the rotation axes for determining suitable sample weights.
A higher local density indicates that the data contain more samples with similar rotation axes and, thus, that these axes can be weighted lower to improve the cost function's condition.
Since the motions of sensors $a$ and $b$ are rigidly coupled, either the rotations of $a$ or $b$ can be used for calculating the density as described in the following.

First, the samples are divided into two groups using a threshold on the absolute rotation angles $|\varphi_i|$: samples with (almost) no rotation $(\mathrm{nr})$ and samples with rotation $(\mathrm{r})$.
This is necessary since the rotation axes for low rotations are less meaningful, i.e., the rotation axis for $\varphi_i = 0$ is arbitrary.
This results in $n^{(\mathrm{nr})}$ samples $\mat{M}_i^{(\mathrm{nr})}$ and $n^{(\mathrm{r})}$ samples $\mat{M}_i^{(\mathrm{r})}$ with rotation axes $\vec{n}_i^{(\mathrm{r})}$.

We define the local density of a rotation axis $\vec{n}_i^{(\mathrm{r})}$ as
\begin{align}
\label{eq:axes_density}
    \rho_i = \sum_{j=1}^{n^{(\mathrm{r})}} G_\mathrm{r} \left( d \left( \vec{n}_i^{(\mathrm{r})}, \vec{n}_j^{(\mathrm{r})} \right) \right) \pspace .
\end{align}
The distance function
\begin{align}
\label{eq:axes_distance}
    d(\vec{n}_i, \vec{n}_j) = \frac{\pi}{2} - \left| \arccos{\left( \vec{n}_i\trans \vec{n}_j \right)} - \frac{\pi}{2} \right|
\end{align}
yields the angle between two rotation axes, considering that the axes $\vec{n}$ and $-\vec{n}$ should have a distance of $0$, since they represent the same rotation with different orientations.
The zero-mean Gaussian kernel without normalization
\begin{align}
    \label{eq:density_gaussian_kernel}
    G_\mathrm{r}(x) = \exp{\left( \frac{x^2}{2 d_\mathrm{r}^2} \right)}
\end{align}
is used to reduce the influence of points with a larger distance without requiring a hard threshold.
The standard deviation $d_\mathrm{r}$ specifies the range for the density calculation.
By omitting the normalization factor, each axis has a minimum density of $1$ since it is always within distance $0$ to itself.

\begin{figure}
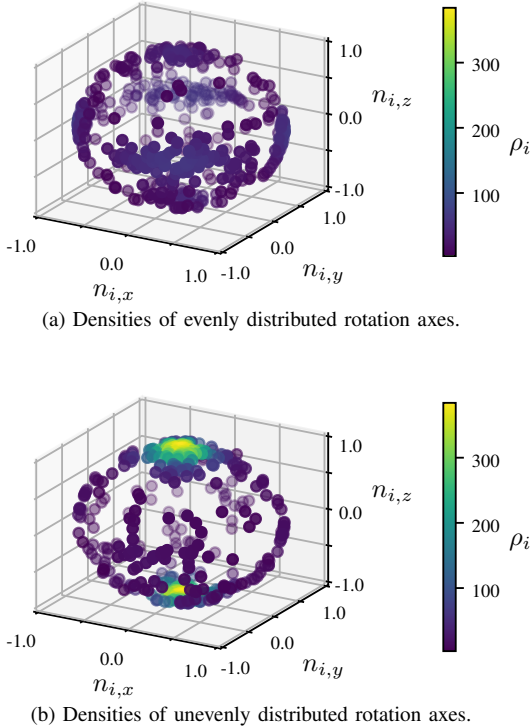

    \centering
    \subfloat[Densities of evenly distributed rotation axes.\label{fig:density:hill}]{%
        \centering%
        \def\svgwidth{0.9\columnwidth}%
        \input{images/density_even_img}%
    }
    \\
    \subfloat[Densities of unevenly distributed rotation axes.\label{fig:density:plane}]{%
        \centering%
        \def\svgwidth{0.9\columnwidth}%
        \input{images/density_uneven_img}%
    }
  \caption{
  The densities $\rho_i$ of datasets with \protect\subref{fig:density:hill} evenly and \protect\subref{fig:density:plane} unevenly distributed rotations are shown.
  Each axis $\vec{n}_i$ is represented as a point on the unit sphere; the color indicates the local density $\rho_i$.
  The densities of the unevenly distributed rotation axes clearly show a cluster at the $z$-axis, caused by mostly planar motion.
  }
  \label{fig:density}
\end{figure}

Fig.~\ref{fig:density} shows density examples for two differently distributed datasets.
As expected, the densities of the unevenly distributed rotation axes~(Fig.~\ref{fig:density:plane}) are significantly higher and more clustered than the densities of evenly distributed axes~(Fig.~\ref{fig:density:hill}).

The density-based weight for each point is calculated as
\begin{align}
    \label{eq:density_weight}
    w_{\rho, i} = \frac{1}{\sqrt{\rho_i}} \pspace .
\end{align}
The inverse square root ensures that samples with a higher density receive a lower weight, while still not overweighting single samples in relation to samples within larger clusters.
Our definition of density enables an iterative calculation of the densities and weights, e.g., in an online calibration scenario.

Finally, the no-rotation and rotation samples are combined to the weighted cost matrix $\mat{Q}_w$ using the weights
\begin{align}
    \label{eq:density_cost_matrix}
    w_i^{(\mathrm{nr})} = 1 \pspace , \quad 
    w_i^{(\mathrm{r})} = n^{(\mathrm{r})} \frac{w_{\rho, i}}{\Sigma_{\rho}}
\end{align}
with $\Sigma_{\rho} = \sum_i^{n^{(\mathrm{r})}} w_{\rho, i}$, which preserves the original ratio between no-rotation and rotation samples.

\subsection{Combined Sample Weighting}
\label{sec:combined_weights}

The weighted cost matrix $\mat{Q}_w$ results in a lower translation condition number due to a more even distribution of rotation axes.
However, this can also lead to an increase of the rotation condition number when the original problem is already well-conditioned.

To prevent this, a transition between the unweighted and the weighted cost matrices is achieved using a sensitivity-based weight in the form of a sigmoid function
\begin{align}
    \gamma = \frac{1}{1 + \exp{(s_\gamma \, (c_\gamma - c_\mathrm{t}))}} \pspace ,
\end{align}
which uses the translation condition number $c_\mathrm{t}$ of the unweighted problem.
The parameter $c_\gamma$ controls at which condition number $\gamma = 0.5$; the parameter $s_\gamma$ is used to adjust the slope of the transition between the original and the weighted cost matrix.
The combined cost matrix is then calculated by
\begin{align}
    \mat{Q}_\gamma = (1 - \gamma) \, \mat{Q} + \gamma \, \mat{Q}_w \pspace .
\end{align}
By this, the original cost matrix $\mat{Q}$ is preserved for lower values of $c_\mathrm{t}$, while for higher values of $c_\mathrm{t}$, the weighted cost matrix $\mat{Q}_w$ is used to a greater extent.

\section{EXPERIMENTS}
\label{sec:experiments}

Our approach is evaluated on artificially augmented data from the KITTI odometry dataset~\cite{geiger2012autonomous}.
This dataset provides sensor data and ground-truth motion data recorded with a vehicle driving in urban environments.
First, all non-planar components of the ground-truth motion are removed.
Afterward, non-planar components are added in a controlled manner by using a sine function with adjustable amplitude as an artificial elevation profile.
A higher amplitude leads to more non-planar motion and, thus, more evenly distributed rotation axes.
This makes it possible to create datasets with a known number of unevenly distributed $n^{(\mathrm{uneven})}$ and evenly distributed $n^{(\mathrm{even})}$ rotation axes while retaining the general motion of the vehicle.
A fixed $n^{(\mathrm{even})} = 100$ is used for all experiments.
The motion of the second sensor is then created by transforming the augmented ground-truth motion with an artificial calibration.
Finally, Gaussian distributed noise is added to each transformation to simulate the behavior of visual odometry.
The amount is chosen proportional to its absolute translation, with standard deviations $\sigma_\mathrm{r}$ in \si{\deg\per\meter} for rotation and $\sigma_\mathrm{t}$ in~\si{\%} for translation.

We compare our method to the VQ-based approach proposed by Schmidt et al.~\cite{schmidt2008data}.
For this, we use normalized rotation axes as representation for rotations and $k$-means clustering~\cite{macqueen1967some} as VQ algorithm.
The authors of~\cite{schmidt2008data} propose a preprocessing step in which samples with low rotation magnitudes are discarded.
For a fair comparison, all no-rotation samples are used as described in Section~\ref{sec:density_weights} since the VQ-based performance significantly drops in our experiments when discarding these samples.
This is most likely caused by the fact that a larger number of samples, in general, can help to compensate for noise, even though they contain no rotational information.

The rotation and translation errors for an estimated calibration $\sol{\quat{q}}$ and known ground-truth $\quat{q}_T$ are calculated by
\begin{subequations}
\begin{align}
    \varepsilon_\mathrm{r} &= 2 \arccos(\vec{q}_{\varepsilon,1}) \pspace ,\\
    \varepsilon_\mathrm{t} &= \norm{2 \, \quat{q}_{\varepsilon,d} \, \quat{q}^*_{\varepsilon,r}} 
\end{align}
\end{subequations}
with $\quat{q}_\varepsilon = \quat{q}_T^{-1} \, \sol{\quat{q}}$ and $\vec{q}_\varepsilon = \vectorize(\quat{q}_\varepsilon)$.
All results are averaged over $10$ runs.

\subsection{Sensitivity}
\label{sec:experiments:sensitivity}

First, it is verified that the translation condition number $c_\mathrm{t}$ can be utilized to detect an overrepresented axis.
For this, $c_\mathrm{t}$ is calculated for a gradually increasing proportion of unevenly distributed samples $n^{(\mathrm{uneven})}$.
The results shown in Fig.~\ref{fig:sensitivity} clearly demonstrate that an increasing $n^{(\mathrm{uneven})}$ directly affects $c_\mathrm{t}$.
Thus, $c_\mathrm{t}$ is a useful measure for assessing the relative amount of planar motion.

\begin{figure}
    \centering
\begin{tikzpicture}

\definecolor{darkgray176}{RGB}{176,176,176}
\definecolor{steelblue31119180}{RGB}{31,119,180}

\begin{axis}[
width=0.8\columnwidth,
height = 0.3\linewidth,
scale only axis,
label style={font=\footnotesize},
tick align=inside,
tick pos=left,
ticklabel style={font=\footnotesize},
x grid style={darkgray176},
xlabel={\(\displaystyle n^{\mathrm{(uneven)}}\)},
xmin=-395, xmax=10495,
xtick style={color=black},
xlabel style={yshift=0.3em},
y grid style={darkgray176},
ylabel={\(\displaystyle c_\mathrm{t}\)},
ymin=-0.383186830805022, ymax=44.8546520067181,
ytick style={color=black},
ylabel style={yshift=-0.3em},
scaled ticks=false
]
\input{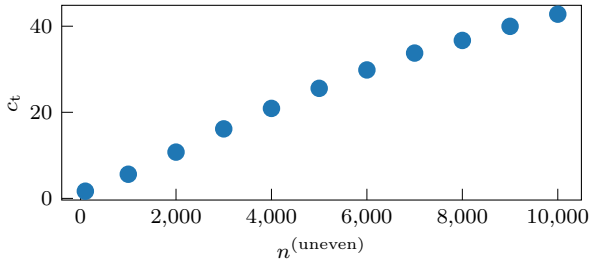}
\end{axis}

\end{tikzpicture}
    \caption{
    The translation condition numbers $c_\mathrm{t}$ for multiple values of $n^{(\mathrm{uneven})}$ are shown.
    The experiment clearly shows that $c_\mathrm{t}$ is a useful indicator of an overrepresented rotation axis.
    }
    \label{fig:sensitivity}
\end{figure}

\subsection{Parameter Selection}
\label{sec:experiments:parameters}

Before evaluating the influence of the VQ-based and our density-based method on the calibration error, the hyperparameters of both approaches must be selected.
Schmidt et al.~\cite{schmidt2008data} recommend using a relative codebook size $k_\mathrm{rel} = \nicefrac{k_\mathrm{clusters}}{n^{(\mathrm{r})}}$.
For our density-based weight, the density range $d_\mathrm{r}$ must be configured.

\begin{figure}
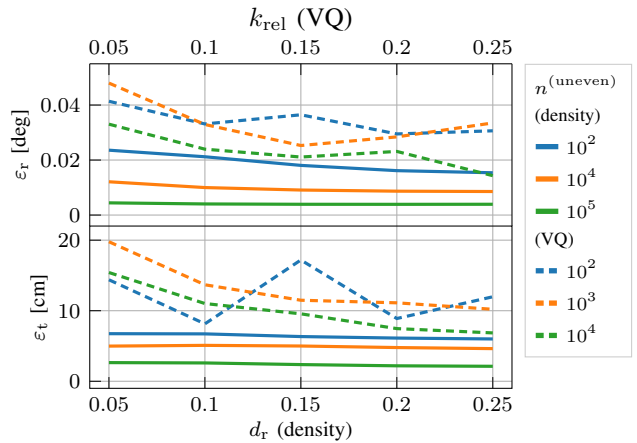

    \centering
\begin{tikzpicture}

\definecolor{darkgray176}{RGB}{176,176,176}
\definecolor{darkorange25512714}{RGB}{255,127,14}
\definecolor{forestgreen4416044}{RGB}{44,160,44}
\definecolor{lightgray204}{RGB}{204,204,204}
\definecolor{steelblue31119180}{RGB}{31,119,180}

\newlength{\paramboxwidth}
\setlength{\paramboxwidth}{0.65\columnwidth}
\newlength{\paramboxheight}
\setlength{\paramboxheight}{0.25\columnwidth}

\pgfplotsset{
	legend image code/.code={
		\draw[mark repeat=2,mark phase=2]
		plot coordinates {
			(0cm,0cm)
			(0.15cm,0cm)
			(0.3cm,0cm)
		};%
	}
}

\pgfplotsset{
    every non boxed x axis/.append style={x axis line style=-},
    every non boxed y axis/.append style={y axis line style=-}
}

\begin{groupplot}[group style={group size=1 by 2, vertical sep=0em}]

\nextgroupplot[
width=\paramboxwidth,
height=\paramboxheight,
scale only axis,
legend columns=1,
legend cell align={left},
legend style={
  legend pos=outer north east,
  fill opacity=0.8,
  draw opacity=1,
  text opacity=1,
  anchor=north west,
  draw=lightgray204,
  font=\scriptsize
},
legend image post style={line width=2pt},
label style={font=\footnotesize},
axis x line=top,
tick align=inside,
ticklabel style={font=\footnotesize},
scaled ticks=false,
x grid style={darkgray176},
xlabel={\(\displaystyle k_{\mathrm{rel}}\) (VQ)},
xlabel style={yshift=-0.3em},
xmajorgrids,
xmin=0.04, xmax=0.26,
xtick pos=right,
xtick style={color=black},
x tick label style={
	/pgf/number format/fixed,
	/pgf/number format/precision=2
},
y grid style={darkgray176},
ylabel={\(\displaystyle \varepsilon_\mathrm{r}\) [deg]},
ylabel style={yshift=-0.3em},
ymajorgrids,
ymin=-0.004, ymax=0.055,
ytick pos=left,
ytick style={color=black},
ytick={0,0.02,0.04},
y tick label style={
	/pgf/number format/fixed,
	/pgf/number format/precision=2
},
]
\input{plots/data/parameters_rot.tex}

\nextgroupplot[
width=\paramboxwidth,
height=\paramboxheight,
scale only axis,
label style={font=\footnotesize},
tick align=inside,
tick pos=left,
ticklabel style={font=\footnotesize},
scaled ticks=false,
x grid style={darkgray176},
xlabel={\(\displaystyle d_\mathrm{r}\) (density)},
xlabel style={yshift=0.3em},
xmajorgrids,
xmin=0.04, xmax=0.26,
xtick style={color=black},
x tick label style={
	/pgf/number format/fixed,
	/pgf/number format/precision=2
},
y grid style={darkgray176},
ylabel={\(\displaystyle \varepsilon_\mathrm{t}\)  [cm]},
ylabel style={yshift=-0.3em},
ymajorgrids,
ymin=-1, ymax=22.0,
ytick style={color=black},
]
\input{plots/data/parameters_trans.tex}

\end{groupplot}

\end{tikzpicture}
    \caption{
    Results of a parameter sweep of $d_\mathrm{r}$ for our density-based method and $k_\mathrm{rel}$ for the VQ-based method using multiple values for $n^{(\mathrm{uneven})}$ are shown.
    The best trade-off between $\epsilon_\mathrm{r}$ and $\epsilon_\mathrm{t}$ is achieved at $d_\mathrm{r} = 0.2$ and $k_\mathrm{rel} = 0.2$, respectively.
    Compared to our method, the VQ-based method shows a larger performance variation for different parameters.
    }
    \label{fig:parameters}
\end{figure}

The errors for multiple configurations of both methods using noise levels of $\sigma_\mathrm{r} = \SI{0.005}{\deg\per\meter}$ and $\sigma_\mathrm{t} = \SI{0.025}{\%}$ are illustrated in Fig.~\ref{fig:parameters}.
Based on these results, $d_\mathrm{r} = 0.2$ and $k_\mathrm{rel} = 0.2$ are selected.
The comparison between the VQ-based and our method further demonstrates that, in contrast to our approach, their performance varies greatly for different parameters $k_\mathrm{rel}$.
This indicates that our density-based method is much easier to parameterize than the VQ-based method.

For both methods, a rotation threshold of $\SI{0.1}{\deg}$ for separating no-rotation and rotation samples has achieved the best results.
The parameters for the combined sample weighting were set to $c_\gamma = 15$ and $s_\gamma = 0.2$.

\subsection{Calibration Errors}
\label{sec:experiments:calibration}

\begin{table*}[t]
    \caption{Comparison between uniform weighting, VQ-based sample selection, and density-based weighting.}
    \label{tab:calibration}
    \begin{center}
        \begin{tabular}{rO{2.2}O{1.3}|O{2.2}O{1.3}|O{2.2}O{1.3}|O{2.2}O{1.3}|O{2.2}O{1.3}|O{2.2}O{1.3}}
    \toprule
    Noise
    & \multicolumn{6}{c|}{$\sigma_\mathrm{r} = \SI{0.01}{\deg\per\meter}$, \, $\sigma_\mathrm{t} = \SI{0.05}{\%}$}
    & \multicolumn{6}{c}{$\sigma_\mathrm{r} = \SI{0.02}{\deg\per\meter}$, \, $\sigma_\mathrm{t} = \SI{0.1}{\%}$} \\
    
    $n^{(\mathrm{uneven})}$
    & \multicolumn{2}{c|}{$100$} & \multicolumn{2}{c|}{$1000$} & \multicolumn{2}{c|}{$10000$}
    & \multicolumn{2}{c|}{$100$} & \multicolumn{2}{c|}{$1000$} & \multicolumn{2}{c}{$10000$} \\
    
    \backslashbox{Method}{Errors}
    & $\varepsilon_\mathrm{t}$ \ [$\si{\centi\metre}$] & $\varepsilon_\mathrm{r}$ \ [$\si{\degree}$]
    & $\varepsilon_\mathrm{t}$ \ [$\si{\centi\metre}$] & $\varepsilon_\mathrm{r}$ \ [$\si{\degree}$]
    & $\varepsilon_\mathrm{t}$ \ [$\si{\centi\metre}$] & $\varepsilon_\mathrm{r}$ \ [$\si{\degree}$]
    & $\varepsilon_\mathrm{t}$ \ [$\si{\centi\metre}$] & $\varepsilon_\mathrm{r}$ \ [$\si{\degree}$]
    & $\varepsilon_\mathrm{t}$ \ [$\si{\centi\metre}$] & $\varepsilon_\mathrm{r}$ \ [$\si{\degree}$]
    & $\varepsilon_\mathrm{t}$ \ [$\si{\centi\metre}$] & $\varepsilon_\mathrm{r}$ \ [$\si{\degree}$] \\
    
    \midrule

    Uniform Weighting
        & \B 12.68 & \B 0.031 &    13.07 &    0.019 &     8.96 & \B 0.010
        &    43.03 & \B 0.081 &    21.86 & \B 0.041 &    17.43 & \B 0.021 \\
    
    Schmidt et al.~\cite{schmidt2008data}
        &    27.60 &    0.049 &    20.89 &    0.049 &    17.36 &    0.031
        &    74.32 &    0.180 &    62.25 &    0.131 &    39.67 &    0.087 \\
        
    Ours
        &    12.70 & \B 0.031 & \B 12.94 & \B 0.018 & \B  6.18 & \B 0.010
        & \B 42.72 & \B 0.081 & \B 21.19 & \B 0.041 & \B 11.72 &    0.026 \\

    \bottomrule    
\end{tabular}

    \end{center}
\end{table*}

Finally, the impact of both methods on the calibration error is evaluated using the previously selected parameters.
For this, the calibration is performed on multiple noise configurations and values of $n^{(\mathrm{uneven})}$.
Fig.~\ref{fig:calibration} compares the original results with uniformly distributed weights and the results after applying the respective method.
The VQ-based method induces larger errors in almost all cases.
Since their method was not designed for data with relatively high noise, removing samples can lead to larger errors if a lot of samples with relatively high noise are left.
In contrast, our proposed method leads to almost no increase in the rotation error.
Furthermore, the translation estimation is significantly improved for data with a large translation condition number $c_\mathrm{t}$.
For data with a small $c_\mathrm{t}$, the results remain unchanged, as intended by the sensitivity-based weighting.

\begin{figure}
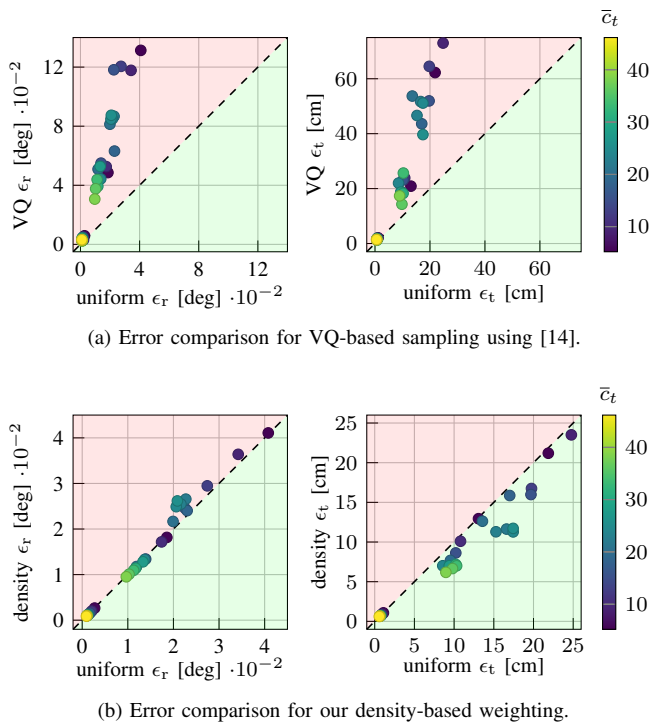

\centering
\newlength{\errorboxsize}
\setlength{\errorboxsize}{0.33\columnwidth}
\subfloat[Error comparison for VQ-based sampling using \cite{schmidt2008data}.\label{fig:calibration:kmeans}]{%
    \centering%
\begin{tikzpicture}

\begin{groupplot}[group style={group size=2 by 1, horizontal sep=3em}]

\nextgroupplot[
width=\errorboxsize,
height=\errorboxsize,
scale only axis,
label style={font=\footnotesize},
tick align=inside,
tick pos=left,
ticklabel style={font=\footnotesize},
scaled ticks=false,
grid=both,
xlabel={uniform \(\displaystyle \epsilon_\mathrm{r}\) [deg] $\cdot 10^{-2}$},
xmin=-0.005, xmax=0.14,
xtick style={color=black},
xlabel style={yshift=0.3em},
xtick={0,0.04,0.08,0.12},
xticklabels={
   0, 4, 8, 12
},
ylabel={VQ \(\displaystyle \epsilon_\mathrm{r}\) [deg] $\cdot 10^{-2}$},
ylabel style={yshift=-0.3em},
ymin=-0.005, ymax=0.14,
ytick style={color=black},
ytick={0,0.04,0.08,0.12},
yticklabels={
   0, 4, 8, 12
},
]
\input{plots/data/errors_kmeans_rot.tex}

\draw[name path=bisector, semithick, black, dashed]
(rel axis cs:0,0) -- (rel axis cs:1,1);
\path[name path=above] (rel axis cs:0,1) -- (rel axis cs:1,1);
\path[name path=below] (rel axis cs:0,0) -- (rel axis cs:1,0);
\addplot [green, opacity=0.1] fill between[of=bisector and below];
\addplot [red, opacity=0.1] fill between[of=above and bisector];

\nextgroupplot[
width=\errorboxsize,
height=\errorboxsize,
scale only axis,
colorbar,
colormap/viridis,
colorbar style={
    title={\(\displaystyle \overline{c}_t\)},
    title style={yshift=-0.5em, font=\footnotesize},
    width=0.5em,
},
label style={font=\footnotesize},
tick align=inside,
tick pos=left,
ticklabel style={font=\footnotesize},
grid=both,
xlabel={uniform \(\displaystyle \epsilon_\mathrm{t}\) [cm]},
xmin=-3.0, xmax=75.0,
xtick style={color=black},
xlabel style={yshift=0.3em},
xtick={0,20,40,60},
ylabel={VQ \(\displaystyle \epsilon_\mathrm{t}\) [cm]},
ylabel style={yshift=-0.3em},
ymin=-3.0, ymax=75.0,
ytick style={color=black},
ytick={0,20,40,60},
]
\input{plots/data/errors_kmeans_trans.tex}

\draw[name path=bisector, semithick, black, dashed]
(rel axis cs:0,0) -- (rel axis cs:1,1);
\path[name path=above] (rel axis cs:0,1) -- (rel axis cs:1,1);
\path[name path=below] (rel axis cs:0,0) -- (rel axis cs:1,0);
\addplot [green, opacity=0.1] fill between[of=bisector and below];
\addplot [red, opacity=0.1] fill between[of=above and bisector];

\end{groupplot}
\end{tikzpicture}%
}%
\\%
\subfloat[Error comparison for our density-based weighting.\label{fig:calibration:density}]{%
    \centering%
\begin{tikzpicture}

\begin{groupplot}[group style={group size=2 by 1, horizontal sep=3em}]

\nextgroupplot[
width=\errorboxsize,
height=\errorboxsize,
scale only axis,
label style={font=\footnotesize},
tick align=inside,
tick pos=left,
ticklabel style={font=\footnotesize},
scaled ticks=false,
grid=both,
xlabel={uniform \(\displaystyle \epsilon_\mathrm{r}\) [deg] $\cdot 10^{-2}$},
xmin=-0.002, xmax=0.045,
xtick style={color=black},
xlabel style={yshift=0.5em},
xtick={0,0.01,0.02,0.03,0.04},
xticklabels={
   0, 1, 2, 3, 4
},
ylabel={density \(\displaystyle \epsilon_\mathrm{r}\) [deg] $\cdot 10^{-2}$},
ylabel style={yshift=-0.5em},
ymin=-0.002, ymax=0.045,
ytick style={color=black},
ylabel style={yshift=0.6em},
ytick={0,0.01,0.02,0.03,0.04},
yticklabels={
   0, 1, 2, 3, 4
},
]
\input{plots/data/errors_density_rot.tex}

\draw[name path=bisector, semithick, black, dashed]
(rel axis cs:0,0) -- (rel axis cs:1,1);
\path[name path=above] (rel axis cs:0,1) -- (rel axis cs:1,1);
\path[name path=below] (rel axis cs:0,0) -- (rel axis cs:1,0);
\addplot [green, opacity=0.1] fill between[of=bisector and below];
\addplot [red, opacity=0.1] fill between[of=above and bisector];

\nextgroupplot[
width=\errorboxsize,
height=\errorboxsize,
scale only axis,
colorbar,
colormap/viridis,
colorbar style={
    title={\(\displaystyle \overline{c}_t\)},
    title style={yshift=-0.5em, font=\footnotesize},
    width=0.5em,
    xshift=-0.5em,
},
label style={font=\footnotesize},
tick align=inside,
tick pos=left,
ticklabel style={font=\footnotesize},
grid=both,
xlabel={uniform \(\displaystyle \epsilon_\mathrm{t}\) [cm]},
xmin=-1.0, xmax=26.0,
xtick style={color=black},
xlabel style={yshift=0.5em},
xtick={0,5,10,15,20,25},
ylabel={density \(\displaystyle \epsilon_\mathrm{t}\) [cm]},
ylabel style={yshift=-0.5em},
ymin=-1.0, ymax=26.0,
ytick style={color=black},
ytick={0,5,10,15,20,25},
]
\input{plots/data/errors_density_trans.tex}

\draw[name path=bisector, semithick, black, dashed]
(rel axis cs:0,0) -- (rel axis cs:1,1);
\path[name path=above] (rel axis cs:0,1) -- (rel axis cs:1,1);
\path[name path=below] (rel axis cs:0,0) -- (rel axis cs:1,0);
\addplot [green, opacity=0.1] fill between[of=bisector and below];
\addplot [red, opacity=0.1] fill between[of=above and bisector];

\end{groupplot}
\end{tikzpicture}%
}%
\caption{
    A comparison between the original rotation and translation errors using uniform sample weighting and the errors after applying the respective method is shown for $n^{(\mathrm{uneven})}$ from $1000$ to $10000$ and multiple noise levels.
    The color of each point indicates the average translation condition number $\overline{c}_t$ for this combination of $n^{(\mathrm{uneven})}$ and noise level.
    Points on the dashed bisector indicate no change, points below the bisector indicate a reduced error, and points over the bisector indicate a larger error caused by the respective method.
}
\label{fig:calibration}
\end{figure}

Table~\ref{tab:calibration} shows quantitative examples for two noise levels and multiple values of $n^{(\mathrm{uneven})}$.
Especially for more noise and larger $n^{(\mathrm{uneven})}$, the results are significantly improved by our proposed method.
For smaller $n^{(\mathrm{uneven})}$, i.e., more evenly distributed rotation axes, our method does not negatively affect the result.
Furthermore, the rotational accuracies of our method and the baseline with uniform weighting are almost identical in all cases.
\section{CONCLUSION}

We have presented a density-based method for weighting hand-eye calibration samples in order to reduce the negative influence of an overrepresented rotation axis.
Our evaluation on artificially augmented motion data shows that our method is able to outperform the state of the art for the use case of automated vehicles.
Further, we have derived an estimation of the translation conditioning and the overrepresented rotation axis.
This information can be employed as user feedback during the calibration process to prevent ill-conditioned data already during recording.

Especially when calibrating sensors of complex systems like automated vehicles, it is helpful if the calibrating user is not required to have detailed system knowledge about the calibration algorithm.
Our approach for providing user feedback and for an automated weighting of samples is one step further toward this goal.



{\small
\bibliographystyle{IEEEtran}
\bibliography{mybibfile}

\begin{thebibliography}{10}
\providecommand{\url}[1]{#1}
\csname url@rmstyle\endcsname
\providecommand{\newblock}{\relax}
\providecommand{\bibinfo}[2]{#2}
\providecommand\BIBentrySTDinterwordspacing{\spaceskip=0pt\relax}
\providecommand\BIBentryALTinterwordstretchfactor{4}
\providecommand\BIBentryALTinterwordspacing{\spaceskip=\fontdimen2\font plus
\BIBentryALTinterwordstretchfactor\fontdimen3\font minus
  \fontdimen4\font\relax}
\providecommand\BIBforeignlanguage[2]{{%
\expandafter\ifx\csname l@#1\endcsname\relax
\typeout{** WARNING: IEEEtran.bst: No hyphenation pattern has been}%
\typeout{** loaded for the language `#1'. Using the pattern for}%
\typeout{** the default language instead.}%
\else
\language=\csname l@#1\endcsname
\fi
#2}}

\bibitem{daniilidis1999hand}
K.~Daniilidis, ``Hand-eye calibration using dual quaternions,'' \emph{The
  International Journal of Robotics Research}, vol.~18, no.~3, pp. 286--298,
  1999.

\bibitem{andreff2001robot}
N.~Andreff, R.~Horaud, and B.~Espiau, ``Robot hand-eye calibration using
  structure-from-motion,'' \emph{The International Journal of Robotics
  Research}, vol.~20, no.~3, pp. 228--248, 2001.

\bibitem{brookshire2013extrinsic}
J.~Brookshire and S.~Teller, ``Extrinsic calibration from per-sensor
  egomotion,'' in \emph{Robotics: Science and Systems VIII}, 2013.

\bibitem{horn2021online}
M.~Horn, T.~Wodtko, M.~Buchholz, and K.~Dietmayer, ``Online extrinsic
  calibration based on per-sensor ego-motion using dual quaternions,''
  \emph{IEEE Robotics and Automation Letters}, vol.~6, no.~2, pp. 982--989,
  2021.

\bibitem{tsai1989new}
R.~Y. Tsai and R.~K. Lenz, ``A new technique for fully autonomous and efficient
  3d robotics hand/eye calibration,'' \emph{IEEE Transactions on Robotics and
  Automation}, vol.~5, pp. 345--358, 1989.

\bibitem{schmidt2003robust}
J.~Schmidt, F.~Vogt, and H.~Niemann, ``Robust hand-eye calibration of an
  endoscopic surgery robot using dual quaternions,'' in \emph{Joint Pattern
  Recognition Symposium}, 2003, pp. 548--556.

\bibitem{shi2005approach}
F.~Shi, J.~Wang, and Y.~Liu, ``An approach to improve online hand-eye
  calibration,'' in \emph{Iberian Conference on Pattern Recognition and Image
  Analysis}, vol. 3522, 2005, pp. 647--655.

\bibitem{giamou2019certifiably}
M.~Giamou, Z.~Ma, V.~Peretroukhin, and J.~Kelly, ``Certifiably globally optimal
  extrinsic calibration from per-sensor egomotion,'' \emph{IEEE Robotics and
  Automation Letters}, vol.~4, no.~2, pp. 367--374, 2019.

\bibitem{zhang2005adaptive}
J.~Zhang, S.~Fanhuai, and Y.~Liu, ``An adaptive selection of motion for online
  hand-eye calibration,'' in \emph{Australasian Joint Conference on Artificial
  Intelligence}, 2005, pp. 520--529.

\bibitem{li2018simultaneous}
W.~Li, M.~Dong, N.~Lu, X.~Lou, and P.~Sun, ``Simultaneous robot-world and
  hand-eye calibration without a calibration object,'' \emph{Sensors}, vol.~18,
  2018.

\bibitem{liu2019robust}
J.~Liu, J.~Wu, and X.~Li, ``Robust and accurate hand-eye calibration method
  based on schur matric decomposition,'' \emph{Sensors}, vol.~19, 2019.

\bibitem{mair2011spatio}
E.~Mair, M.~Fleps, M.~Suppa, and D.~Burschka, ``Spatio-temporal initialization
  for imu to camera registration,'' in \emph{IEEE International Conference on
  Robotics and Biomimetics}, 2011, pp. 557--564.

\bibitem{samant2019robust}
C.~Samant, A.~Habed, M.~de~Mathelin, and L.~Goffin, ``Robust hand-eye
  calibration via iteratively re-weighted rank-constrained semi-definite
  programming,'' in \emph{IEEE/RSJ International Conference on Intelligent
  Robots and Systems (IROS)}, 2019, pp. 4482--4489.

\bibitem{schmidt2008data}
J.~Schmidt and H.~Niemann, ``Data selection for hand-eye calibration: A vector
  quantization approach,'' \emph{The International Journal of Robotics
  Research}, vol.~27, pp. 1027--1053, 2008.

\bibitem{siciliano2009robotics}
B.~Siciliano, L.~Sciavicco, L.~Villani, and G.~Oriolo, \emph{Robotics:
  Modeling, Planning, and Control}.\hskip 1em plus 0.5em minus 0.4em\relax
  Springer, 2009.

\bibitem{mccarthy1990introduction}
J.~M. McCarthy, \emph{Introduction to Theoretical Kinematics}.\hskip 1em plus
  0.5em minus 0.4em\relax MIT Press, 1990.

\bibitem{geiger2012autonomous}
A.~Geiger, P.~Lenz, and R.~Urtasun, ``Are we ready for autonomous driving? the
  kitti vision benchmark suite,'' in \emph{IEEE/CVF Conference on Computer
  Vision and Pattern Recognition (CVPR)}, 2012, pp. 3354--3361.

\bibitem{macqueen1967some}
J.~MacQueen \emph{et~al.}, ``Some methods for classification and analysis of
  multivariate observations,'' in \emph{Proceedings of the fifth Berkeley
  symposium on mathematical statistics and probability}, vol.~1, no.~14.\hskip
  1em plus 0.5em minus 0.4em\relax Oakland, CA, USA, 1967, pp. 281--297.

\end{thebibliography}
}

\end{document}